\newcommand{\qlearn}{\text{Q-learning}}
\newcommand{\rmax}{$R_{max}$}
\newcommand{\lstar}{$L^{*}$}
\newcommand{\edsm}{\text{EDSM}}
\newcommand{\commentout}[1]{}
\newcommand{\A}{\mathcal{A}}
\newcommand{\M}{\mathcal{M}} 
\newcommand{\LTLf}{{\sc ltl}$_f$\xspace}
\newcommand{\LDLf}{{\sc ldl}$_f$\xspace}
\newcommand{\Nat}{{\rm I\kern-.23em N}}
\newtheorem{theorem}{Theorem}
\title{Reinforcement Learning with Non-Markovian Rewards}
\author{
Maor Gaon and Ronen I. Brafman\\
Ben-Gurion University of the Negev, Beer Sheva, Israel\\
maorga@post.bgu.ac.il\\
brafman@cs.bgu.ac.il}
\begin{document}

\maketitle

\begin{abstract}
  The standard RL world model is that of a Markov Decision Process (MDP).
  A basic premise of MDPs is that the rewards depend on the last state and
  action only. Yet, many real-world rewards are non-Markovian. For example, 
  a reward  for bringing coffee only if requested earlier and not yet served, is non-Markovian
  if the state only records current requests and deliveries. Past work
  considered the problem of modeling and solving MDPs with non-Markovian rewards (NMR),
  but we know of no principled approaches for RL with NMR. 
  Here, we address the problem of policy learning from experience with such rewards.
   We describe and evaluate empirically four combinations of the classical RL algorithm
  Q-learning and R-max with automata learning algorithms to obtain new RL
  algorithms for domains with NMR. We also prove that 
  some of these variants converge to an optimal policy in the limit.
\end{abstract}

\section{Introduction}
  The standard reinforcement learning (RL) world model is that of a Markov Decision Process (MDP).
  In MDPs, rewards depend on the last state and
  action only. But a reward given
  for bringing coffee only if one was requested earlier and was not served yet, is non-Markovian
  if the state only records current requests and deliveries; and so is a 
  reward for food served only if one's hands  were washed earlier, and no activity that made
  them unclean occurred in between. An RL agent that attempts to learn in such domains without
  realizing that the rewards are non-Markovian will display sub-optimal behavior. For example, depending
  on how often coffee is requested and how costly making coffee is, the agent may learn
  never to make coffee or always to make coffee -- regardless of the user's requests.
  
  We can address non-Markovian rewards (NMRs) 
  by augmenting the state with information that makes the new model Markovian.
  Except for pathological cases, this is always possible. 
  For example, the robot can maintain a variable indicating whether
  an order for coffee was received and not served. Similarly, it can keep track of whether the user's hands are
  clean. Indeed, early work on this topic described various methods for efficiently augmenting the
  state to handle NMRs~\cite{BBG96,Thiebaux06}. More recent work leveraged our better 
  understanding of the relationship between NMRs specified using temporal and dynamic logics and automata to
  provide even better methods~\cite{CamachoCSM17b,BDP18}.  
  Yet, such model transformations are performed offline with complete knowledge of the model and
  the reward structure, and are not useful for a reinforcement learning agent equipped with some fixed state model
  and state variables. 
  
  In this paper, we combine our recent, better understanding of NMRs and automata~\cite{CamachoCSM17b,BDP18}
  with  automata learning algorithms~\cite{Angluin87,GILearning} and  classical RL algorithms~\cite{WatDay,Rmax}
  to address RL with NMRs. We describe algorithms that allow a learning agent with no prior knowledge of the reward 
  conditions to learn how to augment its state space with new features w.r.t.~which rewards are Markovian.
  The added state variables are not directly observable.  However, while regular state variables 
  are affected stochastically by actions, for the rich class of NMRs discussed in this and past work, these state variables
  are affected deterministically by the actions. 
  Hence, in practice, their value can be tracked and updated by the agent.
  
  Like standard RL algorithms, our algorithms explore the state space, learning a transition model or estimating the $Q$ function.
  They also maintain a record of executions (traces) leading to a reward
  as well as traces not leading to a reward. 
  These traces are fed into an off-the-shelf automata learning algorithm, yielding  a deterministic finite-state automata (DFA). 
  The state of this DFA is added as a new state feature. Since this feature is a regular function of the agent's history,
  the agent can constantly update it and maintain its correct value.
  Given the augmented state-space, the reward is Markovian, and is always obtained when the DFA reaches an accepting state.
  The algorithm now adjusts its model to take account of the new state space, and can continue to explore or to exploit.

  We describe four algorithms of this type obtained by combining 
  Q-learning~\cite{WatDay} and \rmax~\cite{Rmax} with two automata learning
  algorithms, the online \lstar~\cite{Angluin87} and the offline \edsm~\cite{EDSM}. We show that
  under certain assumptions, the algorithms that use \lstar\ will converge to an
  optimal policy in the limit, and we evaluate them empirically.
\section{Background}
  We briefly discuss MDPs, RL, NMRs and automata learning.

\subsection{MDPs, RL and NMR}
  RL is usually formulated in the context of a Markov Decision Process (MDP)
  $\M = \langle S,A,Tr,R\rangle$. 
  $S$ is the set of states, $A$ is the set of actions,  $Tr:S\times A\rightarrow \pi(S)$ is the transition
  function that returns for every state $s$ and action $a$ a distribution over the next state. 
  $R:S\times A \rightarrow \mathbb{R}$ is the reward function that
  specifies the real-valued reward received by the agent when applying
  action $a$ in state $s$.

  A solution to an MDP is a {\em policy} that assigns an action to each
  state, possibly conditioned on past states and actions. 
  The {\em value\/} of policy $\rho$ at $s$, $v^{\rho}(s)$, is the expected sum of
  (possibly discounted) rewards when starting at $s$ and selecting
  actions based on $\rho$. Focusing on the infinite horizon, discounted reward case, we
  know that every MDP has an {\em optimal} policy, $\rho^*$ that maximizes the expected discounted sum
  of rewards for every starting state $s\in S$, and that an optimal policy that is stationary and deterministic  
  $\rho^*: S \rightarrow A$ exists~\cite{Puterman}.  The value function obeys Bellman's optimality condition~\cite{Bellman}
  $$v^*(s) = \max_{a\in A} R(s,a) + \gamma \sum_{s'\in S} Tr(s,a,s')v^*(s')$$ The  state-action value function,
  or $Q$-function, is defined as $$Q(s,a) = R(s,a) + \gamma \sum_{s'\in S} Tr(s,a,s')v^*(s')$$
  hence $$v^*(s) = \max_{a\in A} Q(s,a)$$ and $$Q(s,a) = R(s,a) + \gamma \sum_{s'\in S} Tr(s,a,s')\max_{a\in A} Q(s,a)$$

  RL algorithms attempt to find or approximate $\rho^*$ through experience, without prior knowledge of $\M$.
  There are various methods of doing this~\cite{Sutton2}, two of which we discuss later.

  Non-Markovian Reward Decision Processes (NMRDPs)~\cite{BBG96}  extend MDPs to allow for
  rewards that depend on the entire history. It has been observed by much past work, and more recently
  in the area of robotics~\cite{LittmanTFIWM17} that many natural rewards are non-Markovian.
  A general NMR is a function from $(S\times A)^*$ to $\mathbb{R}$.
  However, this definition is too complex to be of practical use because of its infinite structure. 
  For this reason, past work focused on properties of histories that are finitely describable, 
  and in particular, logical languages that are evaluated w.r.t.~finite traces of states and actions,
  representing the agent's history. Recent work has focused on \LTLf\  and \LDLf~ \cite{DegVa13}, in particular. 
  There are various technical advantages to using these languages to specify NMRs (see,~\cite{BDP18} for more details).
  Here, due to space constraints and to make the description more accessible to RL researchers less familiar with these logics,
  we exploit the fact that the expressive power of \LDLf\ is equivalent to that of regular languages, with which most computer scientists are familiar. 
  Thus, we focus on rewards that can be described as regular functions of the agent's state-action history.

\subsection{Automata Learning}
  A {\em deterministic finite state automaton} (DFA) is a tuple $\A=\langle S,\Sigma,\delta,s_0,F\rangle$.
  $S$ is the set of states, $\Sigma$ is the alphabet, $\delta:S\times\Sigma\rightarrow S$ is a deterministic transition function,
  $s_0\in S$ is the initial state, and $F\subseteq S$ are the final/accepting states. 
  It is essentially a deterministic MDP with actions $\Sigma$. 

  Given the equivalence between regular expressions and DFAs, if rewards are 
  regular functions of the agent's state-action history, we can try to learn an automaton that accepts
  exactly those histories that led to a reward. Here we briefly describe two such classical automata learning algorithms.

  \lstar\ is an interactive DFA learning algorithm with an external oracle (a.k.a.~the teacher) that provides answers to membership and equivalence queries. 
  In a membership query the   teacher is asked whether a string belongs to the language. In an equivalence query, it is given a DFA. 
  If this is the correct DFA, the teacher notifies the learner. Otherwise,
  it provides a counter-example -- a string that is not correctly classified by the DFA. This algorithm requires
  a polynomial number of queries in the size of the minimal DFA.

  Let $\Sigma$ be our alphabet, and $L$ the target language. At each step of the algorithm, the learner maintains: 
  a set $Q\subset \Sigma^*$ of {\em access} words and a set $T\subset \Sigma^*$ of {\em test words}. 
  $v,w\in\Sigma^*$ are {\em T-equivalent} (denoted $v\equiv_T w$) if   $\forall u\in T$, $vu\in L \Leftrightarrow wu\in L$.
  $Q$ and $T$ are {\em separable} if no two distinct words in $Q$ are $T$-equivalent.
  $Q$ and $T$ are {\em closed} if for every $q\in Q$ and $a\in\Sigma$ there is some $q'\in Q$ such
  that $q\cdot a\equiv_T q'$.

  If $(Q,T)$ is separable and closed and we know for each $v\in Q\cup T$ whether it is in $L$,
  then we can define a hypothesis automaton $\mathcal{H}$
  as follows: the states of $\mathcal{H}$ are $Q$. $\epsilon$ is the initial state.
  If $\mathcal{H}$ is in state $q\in Q$ and reads $a\in\Sigma$, it transitions to the unique $q'\in Q$ such that $q\cdot a\equiv_T q'$.
  Such a state exists because $(Q,T)$ is closed. It is unique by separability.
  The accepting states of $\mathcal{H}$ are those state $q\in Q$ such that $q\in L$.

  The algorithm proceeds as follows: (1) Initially, $Q:=T:=\{\epsilon\}$. They are clearly separable.
  (2) Repeatedly enlarge $Q$, maintaining its separableness, until $(Q,T)$ is separable and closed. 
  This is possible because, if $(Q,T)$ are not closed, there exists some $q\in Q$ and $a\in\Sigma$ such that
  no $q'\in Q$ is $T$-equivalent to $q\cdot a$. We can find them by membership queries, and add $q\cdot a$ to $Q$.
  (3) Compute the hypothesis automaton $\mathcal{H}$ and use it for an equivalence query. (4) If the answer is YES, terminate with
  success. (5) If the answer is NO with counter example $w$, let $|w|=n$. Using $\log n$ membership queries, one can find
  some $q\not\in Q$ and $t\in\Sigma^*$ such that $(Q\cup{q},T\cup{t})$ is separable. This is done by finding, using binary
  search, two adjacent prefixes of $w$, $w'$ and $w'\cdot a$ ($a\in \Sigma$) that yield words that are tagged differently
  by the language when we do the following process: (a) compute the state $q_i$ and $q_{i+1}$ we reach
  in $\mathcal{H}$ on  $w'$ and $w'\cdot a$; (b) look at the words obtained by replacing $w'$ and $w'\cdot a$ in
  $w$ with the words associated with $q_i$ and $q_{i+1}$. (c) Check their tags.
  (6) Goto 2.

  Unlike \lstar, the class of state merging algorithms~\cite{EDSM}, work off-line on a given set of positive and negative samples. First, 
  a Prefix Tree Acceptor (PTA) is constructed. A PTA is a tree-like DFA. 
  It is built by taking all the prefixes of words in the sample as states and constructing the smallest DFA on these states which 
  is a tree and is strongly consistent with the sample (i.e., accepts every positive sample and rejects every negative one). 

  State merging techniques iteratively consider an automaton and two of its states and aim to merge them. 
  States can be merged if they are compatible. A minimal requirement is that the union of prefixes associated with this
  state will not contain both a positive and a negative sample. This is not sufficient because certain merges imply other merges.
  For example, if we merge a state $s$ containing $\epsilon$ with a state containing $a$, this means that there is a self transition
  from $s$ on $a$. Therefore, this state must also be merged with the prefix $aa$, if it exists.

  Merging leads to generalization because we often merge states containing unlabelled prefixes with states containing
  labeled one. Thus, if $\epsilon$ is accepting and $a$ has no label, when we merge them, we make $a$ accepting.
  In fact, due to the example above, we must actually make $a^*$ accepting.

  The order by which pairs of states are considered for merging is the key difference between the different state merging algorithms. 
  Any wrong merge can affect the final DFA dramatically, and the earlier a wrong merge is performed the higher its impact on the final result. 
  State merging algorithms use various heuristics to select which states to merge. 
  They often perform the merges based on the sum of accepting/rejecting states already observed in the two candidates, 
  preferring candidates with large numbers.

\subsection{Related Work}
  A number of authors have recently emphasized the fact that many desired behaviors are non-Markovian.
  Littman~\cite{Littman15,LittmanTFIWM17} discussed the need for more elaborate reward specification for RL for robotics. 
  He considered scenarios where RL is used to learn a policy in an unknown world where the reward is not intrinsic to the world, 
  but is specified by the designer, usually in a high-level language. For this, he proposed the use of a temporal logic called GLTL. 
  Similarly,~\cite{Li+17} use truncated LTL as a reward specification language,
  and~\cite{CamachoCSM17,Toro+18} use \LTLf\ to specify desired complex behavior.
  Because temporal formulas are evaluated over an entire trace, it is difficult to guide the RL agent locally towards desirable behaviors. 
  \cite{CamachoCSM17} show how to to use the theory of \LTLf\ to use reward shaping to provide correct and useful feedback to the learning agent early on. 

  In all above papers, reward specification is part of the input. We are interested in RL when the reward model is unknown. 
  Some of the work on learning in partially observable environments can be viewed as indirectly addressing this problem.
  For example, classical work on Q-learning with memory~\cite{PMK01} maintains and updates an external memory. Thus,
  it essentially learns an extended state representation. But if we realize that the additional reward variables 
  are essentially states of an automaton, we can apply a more principled approach to learn these automata using state-of-the-art automata learning algorithms. 
  Not surprisingly, Angluin's famous automata learning algorithm~\cite{Angluin87} and early work on learning in unobservable environments (e.g.,~\cite{McCallum3}) 
  have a similar flavor of identifying states with certain suffixes of observations. Since then, automata learning has attracted much theoretical attention 
  because of the central role of automata in computational theory and in linguistics~\cite{LearningAutomata}. Unfortunately, exact learning of a target DFA 
  from an arbitrary set of labeled examples is hard~\cite{Gold78}, and  under standard cryptographic assumptions, it is not PAC-learnable~\cite{KV89}. 
  Thus, provably efficient automata learning is possible only if additional information is provided, as in Angluin's model that includes a teacher. However, 
  there is much work on practical learning algorithms, and in our experimental evaluation, 
  we use the FlexFringe implementation of the well known EDSM algorithm~\cite{EDSM,FlexFringe}. 

\section{Reinforcement Learning with NMRs}
  We present variants of Q-learning~\cite{WatDay} and \rmax~\cite{Rmax} that handle NMR. 
  In this paper we consider only tabular algorithms and  do not deal with function approximations.
  We assume that
  rewards are deterministic. In addition, we assume the agent can identify the reward type, i.e., 
  if two reward conditions have identical values, the agent can differentiate between them.
  While this assumption is natural in many cases (e.g., a robot may receive equal reward for bringing coffee or bringing tea and
  can distinguish between the two), it is made more for efficiency and convenience, and our approach can, in principle, simply
  learn the conjunctive condition.

\subsection{Learning with State Merging Algorithms}
  Algorithm~1 describes an algorithm that combines RL with a state merging algorithm (SMA).
  The agent repeatedly applies its favourite RL algorithm for a number of trials.
  Initially, it uses the original state space. The obtained traces (and implicitly, their prefixes) are stored in memory.
  Then, provided enough positive traces were collected, it calls the SMA with the stored traces
  and obtains a candidate automaton.
  $\A=\langle S_{\A},S_{\M},\delta,s_0,F\rangle$, where $S_{\A}$ is the set of automaton states and $S_{\M}$ -- the set of MDP states -- is the automaton's alphabet.
  (We assume for the sake of simplicity, that $S_{\M}$ also records the last applied action).

  Now, the MDP's state space is modified to reflect the automaton learned,
  replacing $S_{\M}$ with $S_{\M} \times S_{\A}$ (or $S_{\M}\times S_{\A_1}\cdots\times S_{\A_n}$ if there are $n$ NMRs). 
  Finally, in Line 10, 
  the information learned on the previous state space is used to initialize the
  values associated with the new state space.  (This update appears in both the SMA and \lstar\ variants, is
  specific to the RL algorithm and will be discussed later.) The algorithm now repeats the above process with
  the new state space.

  \begin{algorithm}[ht]
    \DontPrintSemicolon
	\caption{RL with NMR and EDSM} \label{alg:qlearn}
	Set $\A$ to some single-state automaton;\\
	\While {time remaining} {	
      Set $S=S_{\M}$ (MDP's state space);\\
      \For {$i=1$ to $c_{trials}$} {
        Execute one trial of RL algorithm on state space $S$;\\
        Store state-action-reward trace;\\
      }
      \If {there are more than $c_{pos}$ positive traces} {
        $\A$ = result of applying EDSM to the current sample;\\
        Update the state space $S = S_{\M}\times S_{\A}$, where $S_{\A}$ are the states of $\A$;\\
        Use estimates learned from $S_{\M}$ to generate initial estimates for updated $S_{\M}$;\\
      }
    }
    \label{alg:edsm}	
  \end{algorithm}

\subsection{Learning with \lstar}
  Algorithm~2 describes an \lstar-based\ algorithm for learning with NMRs.
  The \lstar\  algorithm provides the overall structure, and the RL algorithm learns in the background
  as it tries to answer queries. If \lstar\  asks a membership query, the agent tries to
  generate the corresponding trace, until it succeeds, at which point it can answer the query.
  This, of course, may take many trials, and in our implementation, if after at most $k$ attempts we do not 
  succeed in generating this trace, we tag the trace as negative.
  In each trial, when the trace deviates from the query trace, 
  the algorithm continues until the end of the trial using the current exploration policy.
  When \lstar\  asks an equivalence query, we check our stored
  traces for any counter-example. If none exists, the RL algorithm continues using the current automaton,
  until a counter-example is generated or until learning time ends. 

  All traces encountered are labeled and stored in memory, and the RL algorithm updates its data structure
  based on the stored samples using a state space which is the product of the MDP's $S_{\M}$ and the 
  current automaton's $S_{\A}$. 

  \begin{algorithm}[t]
    \SetAlgoLined
    Run the \lstar\ algorithm\;
    \While {time remaining} {
      Let $\A$ be the current hypothesis automaton for \lstar\;
      \If {\lstar\  asks a membership query $w$} {
        Done = {\em false}\;
        \While {Done = {\em false}} {
          \While {current trace is a strict prefix of $w$} {
            perform the next action in $w$\;
          }
          \eIf {current trace equals $w$} {
            return an answer to \lstar\ based on the label obtained\;
            Done = {\em true}\;
          }{  
            continue exploring using current exploration policy (e.g., $\epsilon$-greedy) until end of trial\;
            store the observed trace and its tag\;
            update the RL algorithm using the traces and state space $S_{\M}\times {S_{\A}}$\;
          }
        }
      }
      \If {\lstar\ asks an equivalence query} {
        \eIf {log contains a counter-example} {
          return the counter example to \lstar\;
        }{
          \Repeat {counter-example is found} {
            continue exploration recording observed traces and updating the RL algorithm\;
          }
          return the counter-example to \lstar\;
        }
      }
    }
    \caption{RL with NMR and \lstar} 
    \label{alg:lstar}	
  \end{algorithm}

  Because \lstar\ is guaranteed to converge to the correct automaton after a bounded (in fact, polynomial) number of queries,
  we can prove the following result. 

\begin{theorem}
  Assuming rewards are regular functions of the state-action history,
  the RL with NMR and \lstar\ algorithm will converge to an optimal policy in the limit with probability 1,
  if the underlying reinforcement algorithm converges to an optimal policy in the limit and selects
  every action in every state with some minimal probability $\epsilon$ for some $\epsilon>0$. 
\end{theorem}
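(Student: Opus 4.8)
The overall plan is to split the argument into two phases: an \emph{identification} phase, in which \lstar\ is driven almost surely to the correct automaton in finite time, and a \emph{convergence} phase, in which the underlying RL algorithm converges on the resulting Markovian product MDP. The engine for the first phase is the persistent-exploration assumption. Since every action is chosen in every state with probability at least $\epsilon > 0$, any fixed realizable finite trace $w$ of length $n$ is produced in a single trial with probability at least $(\epsilon\, p_{\min})^{n} > 0$, where $p_{\min}$ is the smallest positive transition probability occurring along $w$. As trials are repeated indefinitely, each carrying a fixed positive lower bound on the probability of generating $w$ regardless of the past, the second Borel--Cantelli lemma (in its conditional, L\'evy form) gives that $w$ is generated infinitely often with probability $1$. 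I would use this first to argue that every membership query $w$ posed by \lstar\ is eventually answered correctly: the agent keeps trying to drive the system along $w$ and almost surely succeeds in finite time, at which point the observed deterministic reward settles the label. For an equivalence query on a hypothesis $\A$ that differs from the target on some realizable trace, that trace is a counter-example and, by the same argument, is generated almost surely, so the query too is resolved and \lstar\ advances.

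Next I would invoke the query-complexity guarantee recalled earlier: \lstar\ reaches the correct DFA after only a bounded (indeed polynomial) number of membership and equivalence queries. Combined with the previous step, only finitely many queries---each resolved in finite time almost surely---separate the start of the run from the moment the hypothesis equals the target automaton $\A^{*}$. Crucially, once the hypothesis is correct no counter-example exists, so no further revision is ever triggered and the agent fixes $\A^{*}$ from some almost surely finite (random) time onward. Because the hypothesis, and hence the product state space, changes only finitely often, the re-initialization of value estimates performed when the state space is updated occurs only finitely many times; after the last change the underlying algorithm runs undisturbed.

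For the convergence phase I would observe that, with $\A^{*}$ fixed, the product space $S_{\M}\times S_{\A^{*}}$ is precisely the augmentation that renders the reward Markovian: the automaton state is a deterministic function of the history that the agent tracks exactly, and by construction $\A^{*}$ reaches an accepting state exactly on rewarding histories. The agent is therefore running its underlying RL algorithm on a genuine MDP, still exploring every product-state action with probability at least $\epsilon$. By the assumption on that algorithm it converges to an optimal policy of this MDP in the limit with probability $1$, and an optimal policy of the product MDP induces an optimal policy of the original non-Markovian process, since the two share identical dynamics and value. Chaining the phases---almost-sure finite-time acquisition of $\A^{*}$, then limiting convergence of RL on the correct product MDP---yields the claim.

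The step I expect to be the main obstacle is the treatment of queries that cannot be realized in the environment. A membership word whose trace has probability zero (an unreachable state sequence) can never be generated, so the $k$-attempt cutoff tags it negative, possibly disagreeing with the target language on that word. To make the argument rigorous I would restrict attention to the reachable sub-automaton: unreachable product states are visited with probability zero and so cannot affect the value of any policy or the identity of an optimal one, whence mislabeling unrealizable words is harmless. Formalizing that the reachable portion of the target DFA is exactly what \lstar\ needs in order to induce a product MDP with the correct reachable dynamics---and that the attempt cutoff, taken with the number of attempts growing without bound in the limiting analysis, never corrupts the label of a realizable word---is the delicate point on which the whole convergence rests.
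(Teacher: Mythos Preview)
Your proposal is correct and follows essentially the same route as the paper: reduce to (i) every membership query is answered in finite time almost surely, (ii) every equivalence query on a wrong hypothesis is answered in finite time almost surely, both via the positive-probability-per-step argument, and then (iii) invoke convergence of the underlying RL algorithm on the Markovian product MDP once \lstar\ has stabilized. Your treatment is in fact more careful than the paper's, which tacitly assumes every queried trace is realizable (``probability $>c$ \ldots\ of making the desired transition'') and does not discuss the $k$-attempt cutoff or unreachable words at all; the reachable-sub-automaton argument you sketch is exactly the patch that would be needed to make the paper's proof airtight.
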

\begin{proof}
  Since \lstar\ asks a finite and bounded number of queries, to prove this result, 
  it is sufficient to show that (a) membership queries will be answered in finite time with
  probability 1, and that (b) if the automaton hypothesized by \lstar\ is wrong, then 
  the equivallence query will be answered (correctly) in finite time with probability 1.
  For (a): observe that at each step there is probability $>c$, for some strictly positive value $c$
  (determined by the transition function) of making the desired transition when attempting to simulate trace $w$. 
  Since $w$ is finite, then with probability 1, we will succeed in finite time.
  For (b): let $w$ be the counter-example. We need to prove that it will be found with probability
  1 in finite time. The argument is the same as for (a), taking into account that $\epsilon$-greedy exploration
  implies a positive probability $>c'$, for some strictly positive $c'$, of taking each action in each state.
  Finally, once \lstar\ generates the correct automaton, the reward is Markovian w.r.t.~the new state space.
  Because the RL algorithm converges to an optimal policy with probability 1, so will our algorithm.
  Note that our algorithm may continue exploring, but its greedy policy will be the optimal one.
\end{proof}

\subsection{Learning Multiple Rewards}
  The descriptions above focus on a single reward automaton. To handle multiple automata, 
  a separate automaton is learned for each reward type. In EDSM, if multiple automata exist,
  we simply need to occasionally revise the state space when one of the automata changes due to new examples.
  In \lstar, we tried two schemes, which yielded identical results.
  In the first approach, we prioritize the automata and query higher priority automata first. 
  In the second approach, we interleave queries for different automata.  All else remains the same.

\subsection{The RL Algorithm}
  We now discuss the use of \qlearn\ and \rmax\ as the RL algorithm in Algorithms~1 \& 2.
  
\subsubsection{Q-learning}
  ~\cite{WatDay} is a model-free learning algorithm that maintains an estimate of the $Q$ function.
  The function is initialized arbitrarily, and is updated following each action $a$ as follows:
  $$Q(s,a) = (1-\alpha) Q(s,a) +\alpha\cdot (r+\gamma \max_{a'\in A} Q(s',a'))$$
  where $s$ is the state in which $a$ was applied, $r$ is the reward obtained, $s'$ is the resulting state, 
  $0< \alpha < 1$ is the learning rate, and  $0< \gamma < 1$ is the discount factor.

  To update the $Q$ values in Line 10 of Alg.~1 and Line~16 of Alg.~2,
  we initialize all $Q$ values to 0 except for states of the form $(s_{\M},s_{\A})$ where $s_{\A}$ is an accepting
  state of the automaton (i.e., one in which we will receive the NMR). Then, we do experience replay with the
  {\em stored} traces while tracking the automaton state. Thus, if we have a stored trial $s_0,a_1,r_1,s_1,a_2,\ldots, s_n$, 
  we can simulate the state of the automaton along this trace because the automaton is deterministic. 
  We obtain an updated trace: $(s_0,s_{\A,0}),a_1,r_1,(s_1,s_{\A,1} = \delta(s_{\A,0},s_1)),\ldots, (s_n,s_{\A,n} = \delta(s_{\A,n-1},s_n))$. 
  Because our automaton is consistent with past traces (with which it was trained), 
  we know that an accepting state $s_{\A,i}\in F$ will be reached only where the NMR was obtained. 

  Exploration is important for the convergence of RL algorithms. With NMRs, it is not enough to explore state-action pairs,
  but one must also explore traces, otherwise it may not encounter some potential rewards. 
  In \qlearn, once a positive NMR is obtained, it may reinforce the actions that were executed along the trajectory.
  In the next iteration, the algorithm will tend to apply these same actions,
  leading it to remain in the original trajectory, and to obtain the same reward.
  On the positive side because the reward is non-Markovian, we do want to strengthen the entire path.
  On the negative side, this prevents us from collecting good data for the automata learning algorithm, and 
  the learning is done by associating actions to state -- with no higher-level view of the path. 
  Moreover, any stochastic effect will throw us off the path. The algorithm may try to return to the rewarded path,
  but the deviation may already rule-out the desired NMR.

  If the domain is noisy, then there is a better chance of not repeating the same trajectory even if the same policy is used, 
  and this helps us explore alternative traces. Otherwise, the
  standard solution in regular RL is to introduce either a strong bias for optimism, or use an $\epsilon$-greedy policy.
  When $\epsilon$ is large enough, this also induces some trace exploration.

\subsubsection{$\mathbf{R_{max}}$}
  ~\cite{Rmax} is a model-based algorithm that estimates the transition  function based on the observed empirical distribution, 
  and also learns the reward function from observation. This algorithm has a strong exploration bias obtained by initially assuming 
  that every state and action that has not been sampled enough will result in a transition to a fictitious state in which it constantly obtains the maximal reward. 
  The algorithm follows the optimal policy based on its current model. 
  When it collects enough data about a transition on a state and an action, it updates the model, and recomputes its policy. 
  A key parameter of the algorithm is $K$ -- the number of times that an action $a$ must be performed in a state $s$ to mark the pair $(s,a)$ as {\em known}. 
  Once $(s,a)$ is known, $tr(s,a)$ is updated to reflect the empirical distribution. Under the assumption that rewards are deterministic, 
  $r(s,a)$ is updated the first time it is observed. 

  When an automaton $\A$ is learned, the old transition function $Tr_{old}:S_{\M}\times A \rightarrow \pi(S_{\M})$ 
  and reward function $R_{old}:S_{\M}\times A \rightarrow \mathbb{R}$ are replaced by new functions:
  $Tr_{new}:S_{\M}\times S_{\A}\times  A \rightarrow \pi(S_{\M})$ and $R_{new}:S_{\M}\times S_{\A}\times  A \rightarrow \mathbb{R}$,
  where $tr_{new}((s_{\M},s_{\A}),a,(s_{\M}',\delta(s_{\A},s')) =  tr_{old}(s_{\M},a,s_{\M}')$ (and all other entries are 0).
  In practice, because the $S_{\A}$ component is deterministic, there is no need to explicitly compute and store $Tr_{new}$ -- 
  it is representable using the $Tr_{old}$ and $\delta$.
  We set $r_{new}((s_{\M},s_{\A}),a) = r_{old}(s_{\M},a)$ if the reward for $s_{\M},a$ was Markovian, and
  $r_{new}((s_{\M},s_{\A}),a) = r$ if the reward for $s_{\M},a$ was non-Markovian, $s_{\A}\in F$, and $r$ was the NMR's value.
  All other entries are 0.
  At this stage, assuming $\A$ captures the right language, 
  there is no need to generate any new samples, because the error (due to sampling) is the same as before (because $\A$  is deterministic). 
 
  \rmax has an inherent optimism bias. 
  However, this bias induces state-action exploration, not trace exploration.
  To motivate more exploration we do the following.
  When we observe the result of $a$ in $s$ once,  we do not update $r(s,a)$ because
  it may be non-Markovian. Instead,  we try to observe it multiple times to
  generate good input for the automata-learning component, and to ensure that we do not under-estimate its value.
  In addition, in our experiments, we run an $\epsilon$-greedy version of \rmax\ to encourage additional exploration.

\section{Empirical Evaluation}
  We evaluated the algorithms on two environments: non-Markovian multi-armed bandit (MAB) and robot world.
  Non-Markovian MAB is the simplest domain on which we can experiment with learning NMRs. 
  While no state exploration is needed, we need to explore actions sequences to learn the NMR.
  Robot world is a grid-like domain augmented with a variety of NMRs.
  For automata learning we implemented a version of the \lstar\  algorithm in Python based on the GI-Learning library (\url{github.com/gabrer/gi-learning}) 
  and used the EDSM implementation from the FlexFringe library~\cite{FlexFringe} with its code changed to support trace weights.
  All  solvers uses $\epsilon$-greedy exploration with simulated annealing from 0.9 to 0.1 using a rate of 1e-6 updates each step.
  The learning rate $\alpha$ was set to 0.1.

\subsection{Multi-Armed Bandit}
  MAB provides the simplest RL setting. Agents get to choose among $N$ machines/arms/actions in each step.
  In standard MAB, an average reward is associated with each arm. In our domain the reward is deterministic, but it is non-Markovian.
  An agent that treats this problem as a standard, Markovian MAB will learn to associate an average reward with each arm, 
  and will not realize that this reward is related to its past behavior. 
  In our experiments we considered four different reward structure: (1) Reward for using arm 1 four consecutive steps, followed by arm 3.
  (2) Like reward 1, but received with a delay of (arbitrary) three steps. 
  (3) Play arm 3 twice in a row, then arm  2. % ($\Sigma^*332$).
  (4) Reward 1 and Reward 3.

  \begin{figure*}[t] 
    \centering
    \begin{subfigure}[b]{0.24\textwidth}
      \includegraphics[width=\textwidth]{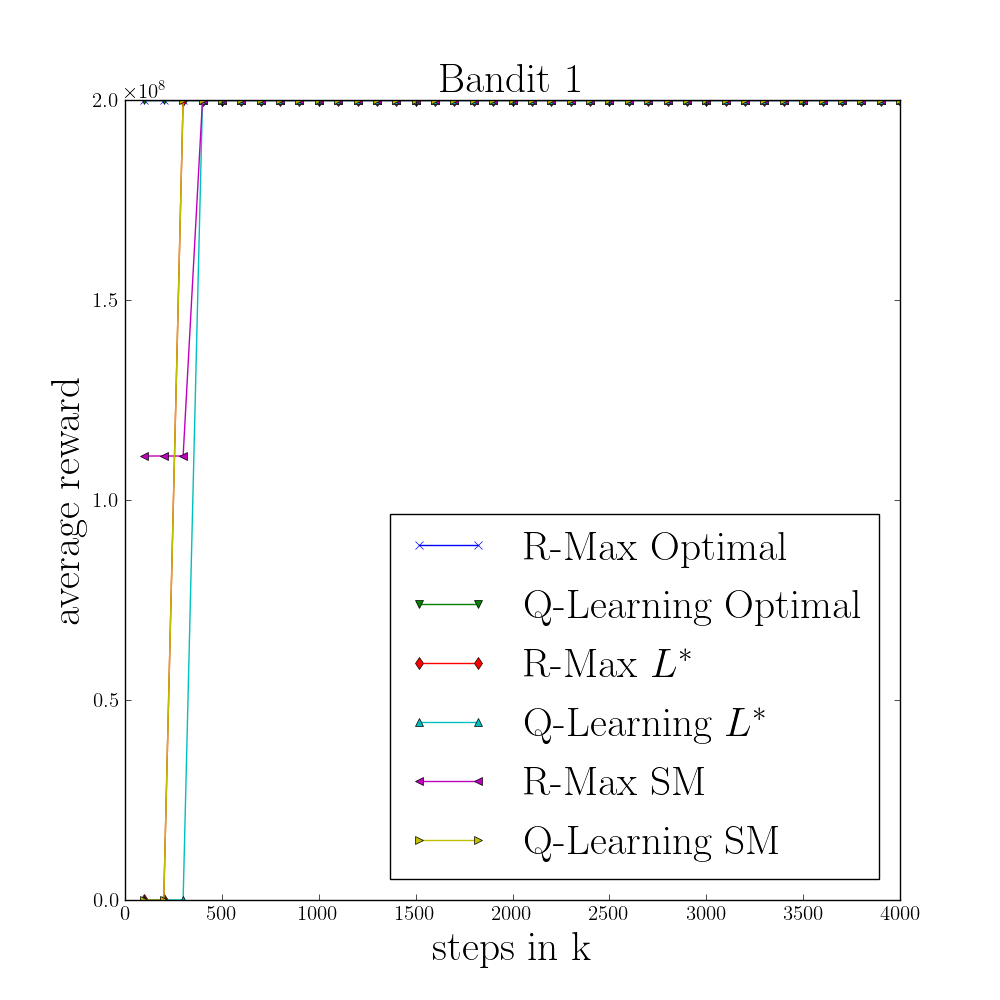}
    \end{subfigure}
    \begin{subfigure}[b]{0.24\textwidth}
      \includegraphics[width=\textwidth]{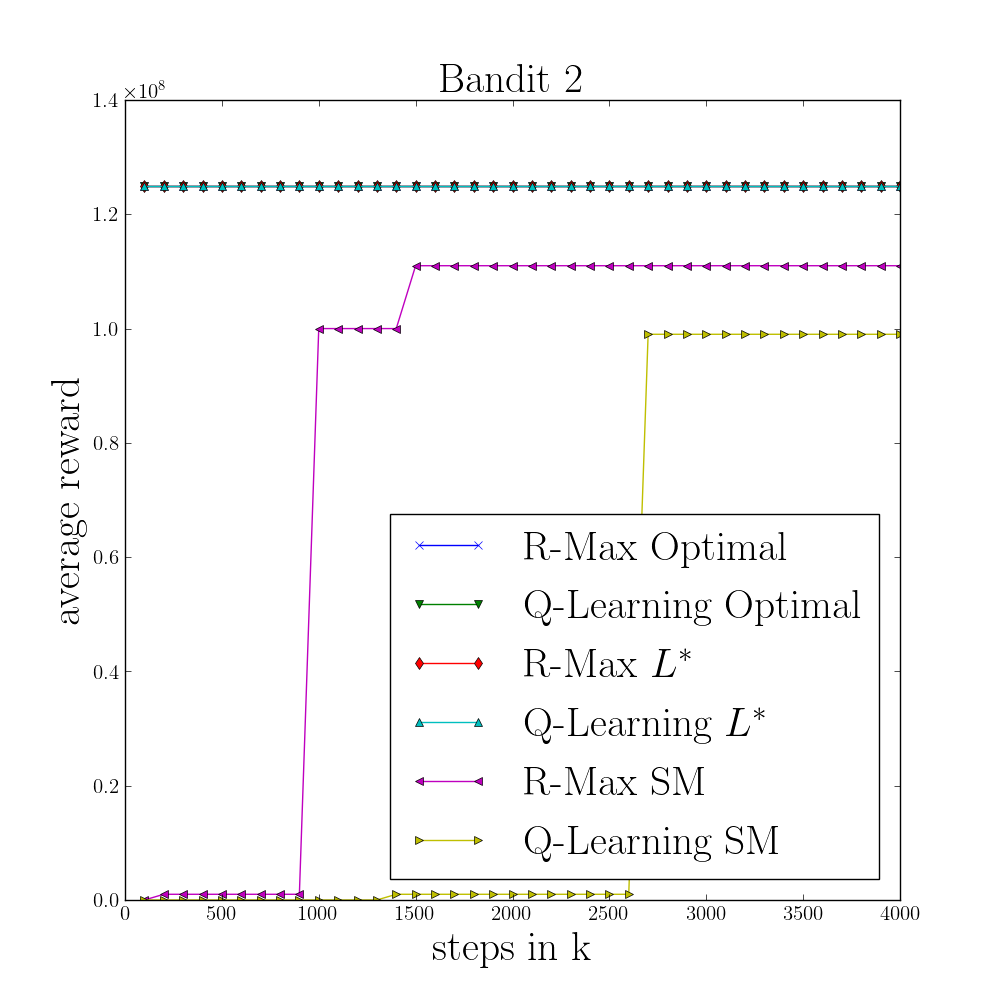}
    \end{subfigure}
    \begin{subfigure}[b]{0.24\textwidth}
      \includegraphics[width=\textwidth]{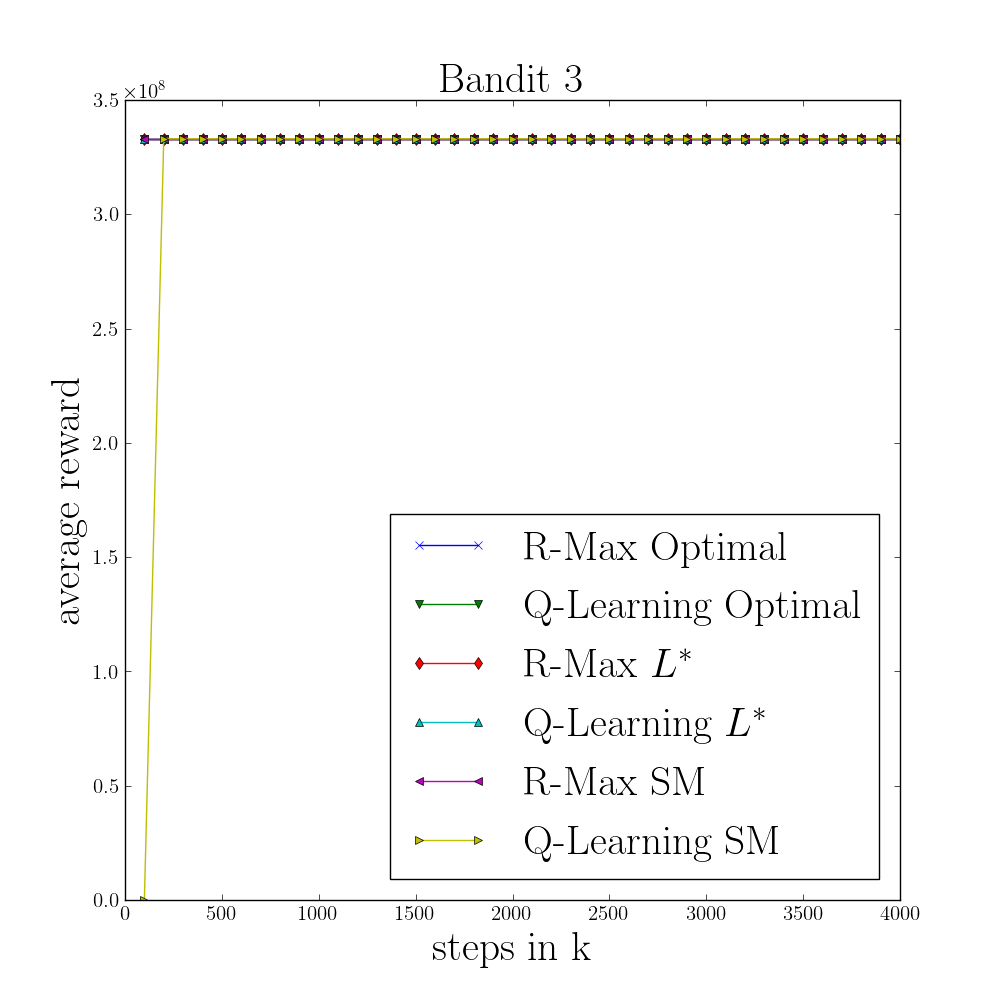}
    \end{subfigure}
    \begin{subfigure}[b]{0.24\textwidth}
      \includegraphics[width=\textwidth]{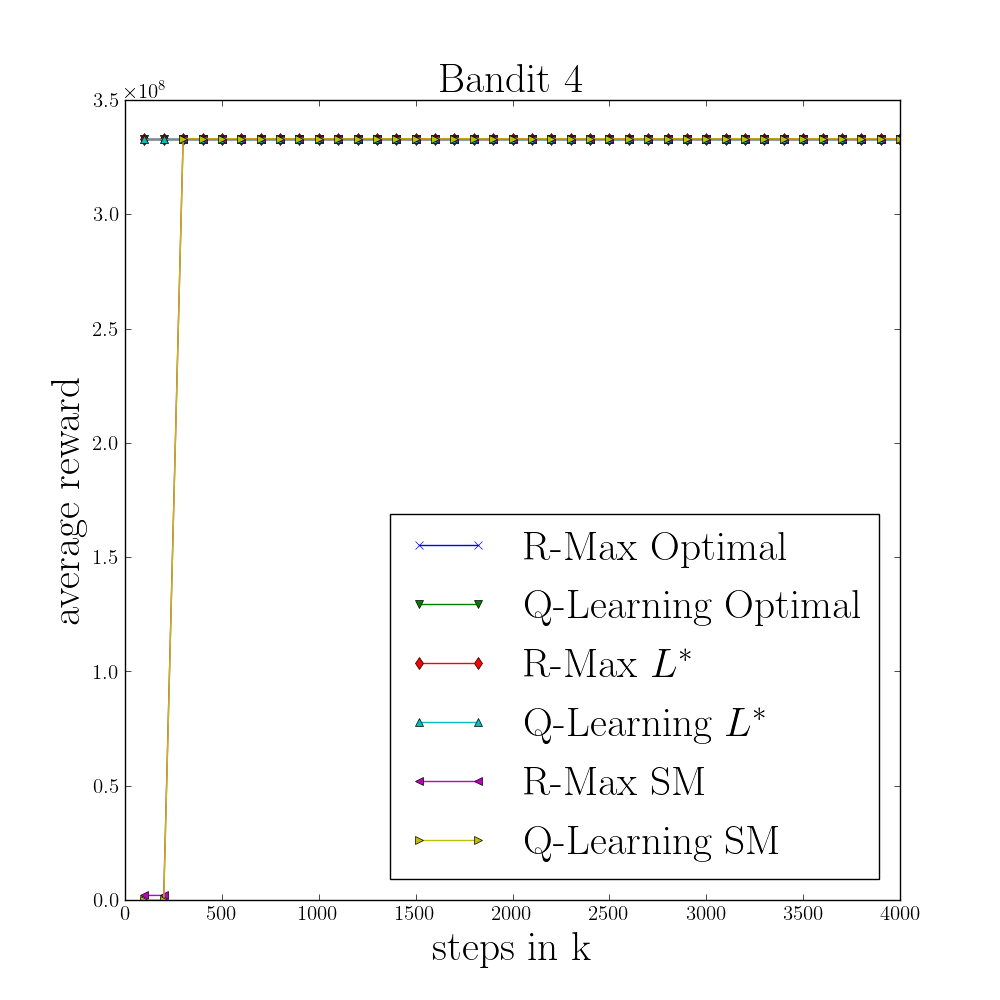}
    \end{subfigure}
    \caption{Non Markovian MAB Results}
    \label{fig:banditgraph}
  \end{figure*}
 
  The results of these experiments are shown in Figure~1.
  Each graph describes the average reward as a function of the number of steps of the learning algorithm. 
  Each episode consists of 20 steps. Tests were run for 4 million steps, with the results evaluated every 100,000 steps
  (= 5,000 traces).  The discount  factor was 0.99. 
  The value plotted for each time step is the average reward over 20 trials of the optimal policy at this point, i.e., 
  the greedy policy  w.r.t.~the current $q$ value or model parameters, with the best current automaton.

  \qlearn\ optimal and \rmax\ optimal refer to the result of running $Q-$learning and $R-$max when they are given the optimal automaton ahead of time. 
  In this case, the problem essentially reduces to learning in a regular MAB, 
  and the algorithms converge to the optimal policy before we reach the first test point (i.e., after less than 100,000 steps). 
  We did not plot the results of vanilla $Q-$learning and $R-$max because the greedy policy will select a single action only, 
  and this results in 0 reward.
 
  We also see the results of the four possible combinations of $Q-$learning and $R-$max with \lstar\ and \edsm. 
  \lstar\ performed much better than \edsm\ because we can provide a relatively efficient
  perfect teacher: Membership queries are easy to generate since there is a single state and the trace consists of actions only.
  Equivalence queries are answered the moment we see a mismatch between the prediction of the automaton and the current
  trace, and with reasonable exploration, this does not take too long.
  EDSM, too, will eventually learn the correct DFA, but it often goes through some non optimal DFAs, 
  containing too many states (and thus with worse generalization) or not capturing the NMR correctly. In these cases, its average reward is lower.

\subsection{Robot World}
  This is a 5x5 grid world, shown below, where the robot starts in a fixed position.
  \begin{center}
    {\includegraphics[width=.2\columnwidth]{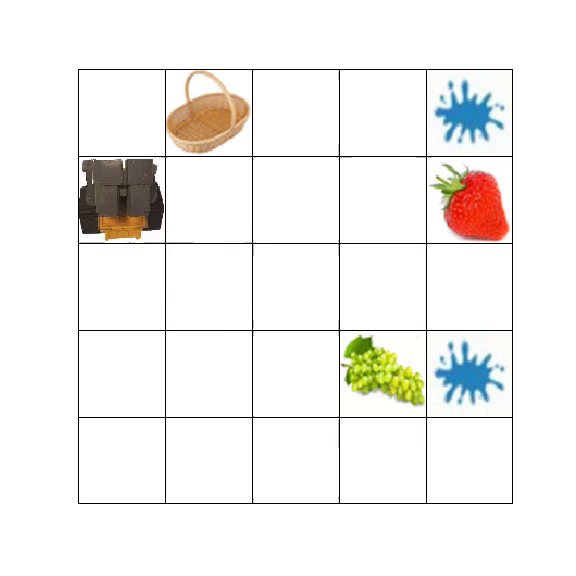}}
  \end{center}
  The grid contains a basket, in a fixed location, and two stains and two fruits in random locations. 
  The stains can be cleaned and the fruits can be picked and put in the basket. 
  The agent actions include moving {\em up/down/left/right, clean, pick}, and {\em put.}
  The robot's state consists of it $(x,y)$ position, the stain locations, the fruit locations, and the fruits held by the robot.
  Move actions have their intended effect with probability of 60\%. With probability 20\% each, 
  they cause a move to either the right or the left of the intended direction. 
  An attempt to move outside the grid results in remaining in place. 
  An attempt to clean where there is no stain or pick and put when there is no fruit has no effect.
  The pick, clean, and put actions can also fail with probability 40\%, in which case, the state does not change. 
  A constant Markovian reward of -1 is given for every {\em move} action and every illegal action (e.g., picking a fruit where none exists).
  Each trial ends after 60 steps, or if both stains are cleaned and the fruits are in the basket.

  The various NMRs were given for:
  (1) Picking a fruit, provided all stains were cleaned -- applicable for each pick.
  (2) Picking a fruit, provided all stains were cleaned, but the reward is delayed by 3 time steps -- applicable only for the first pick.
  (3) Picking a fruit, provided the previous two actions were {\em move-right}.
  (4) Both Reward 1 and 3.

  Each experiment consists of 25,000,000 steps. The policy was evaluated every 1,000,000 steps.
  The discount factor was 0.999999. 
  When learning the automaton, we ignore actions with no effect (e.g., moving outside the grid, picking a non-existent fruit).
  Policies are evaluated by taking their average discounted reward over 20 episodes.
  Each point on the graphs is an average of 11 runs with an error bar of +/- STD.

  \begin{figure*}[t] 
    \centering
    \begin{subfigure}[b]{0.24\textwidth}
      \includegraphics[width=\textwidth]{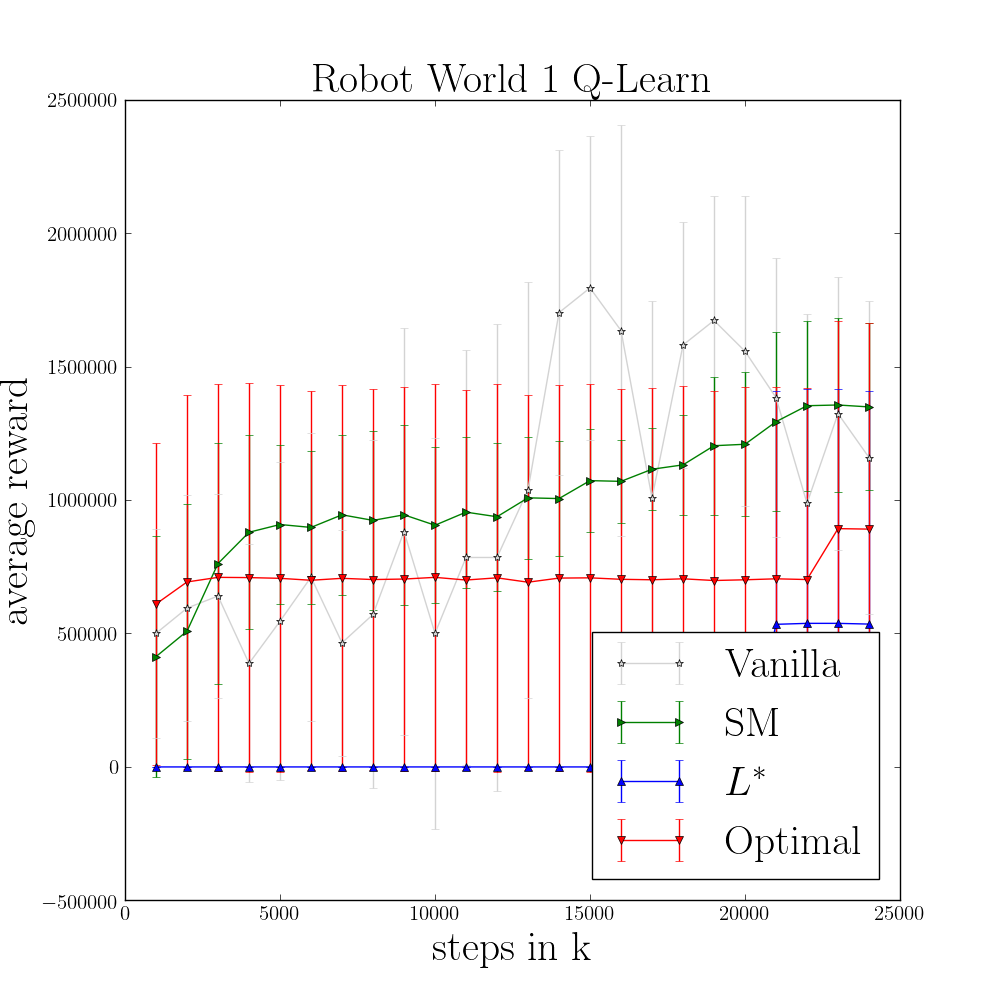}
    \end{subfigure}
    \begin{subfigure}[b]{0.24\textwidth}
      \includegraphics[width=\textwidth]{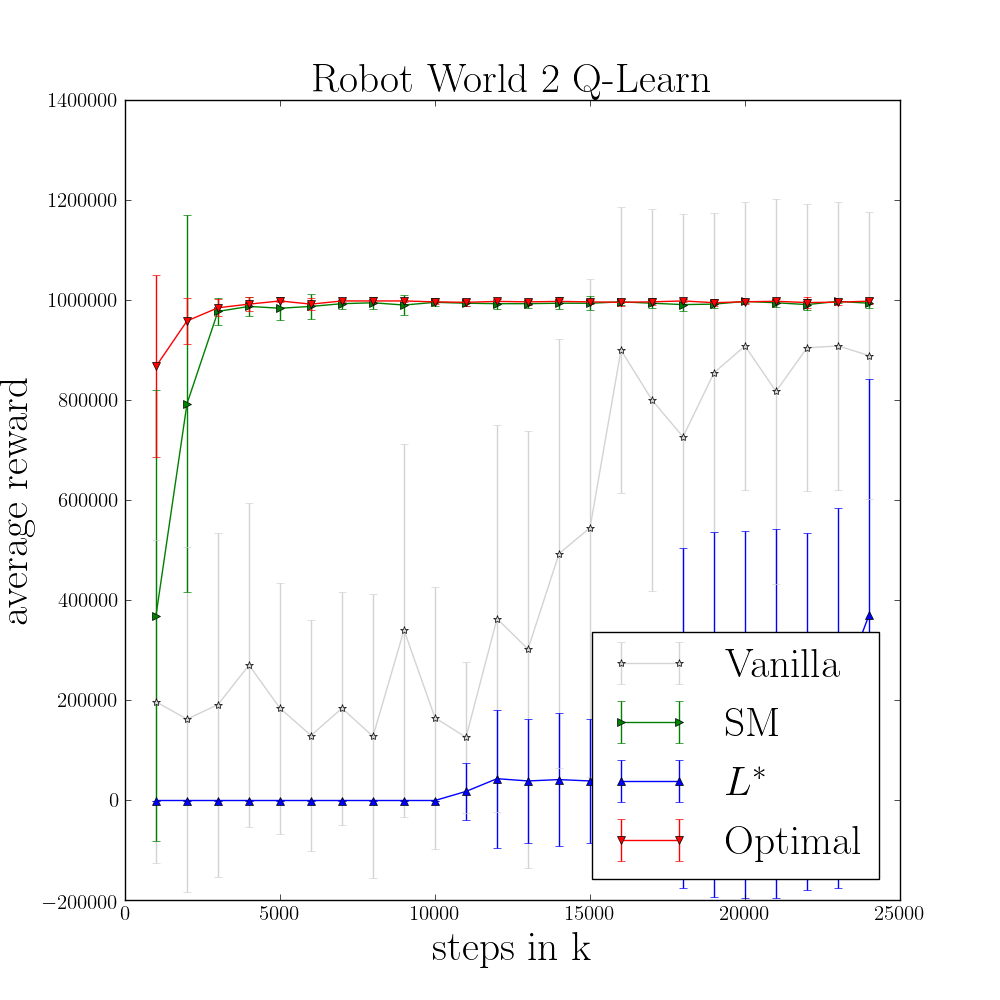}
    \end{subfigure}
    \begin{subfigure}[b]{0.24\textwidth}
      \includegraphics[width=\textwidth]{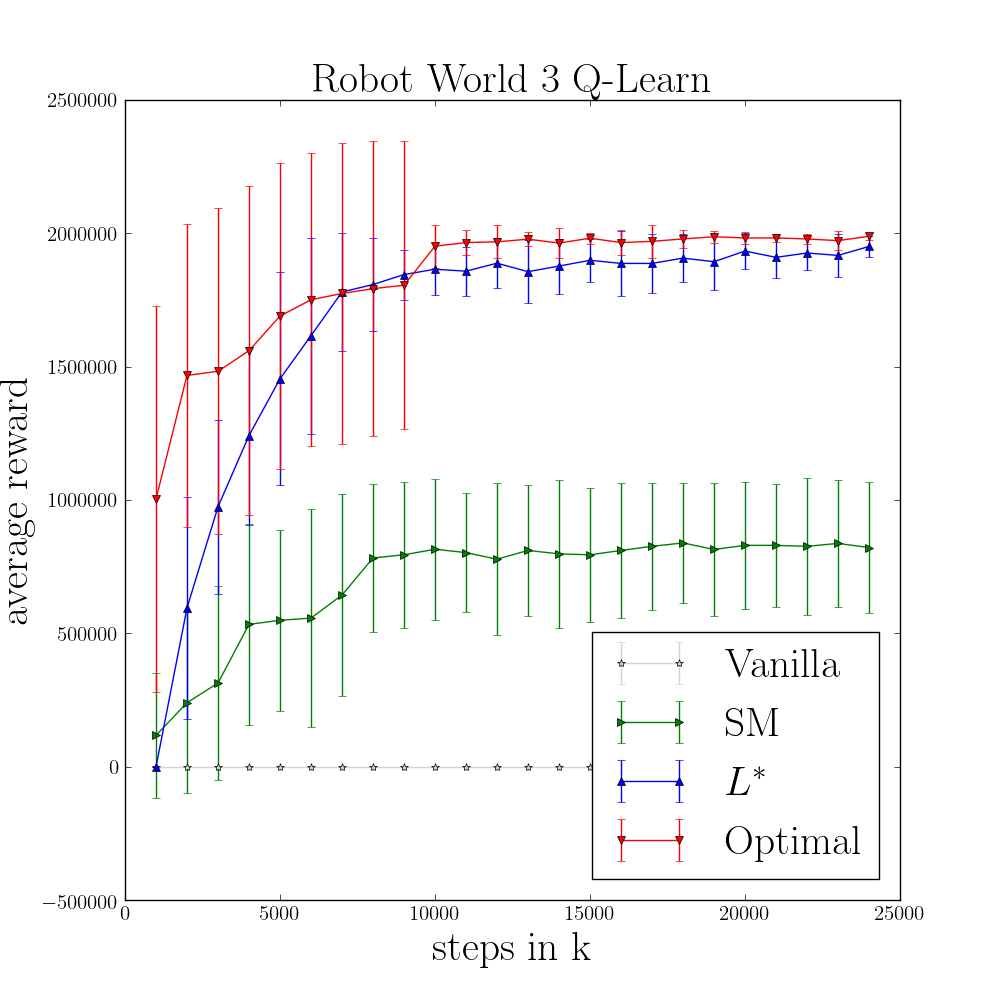}
    \end{subfigure}
    \begin{subfigure}[b]{0.24\textwidth}
      \includegraphics[width=\textwidth]{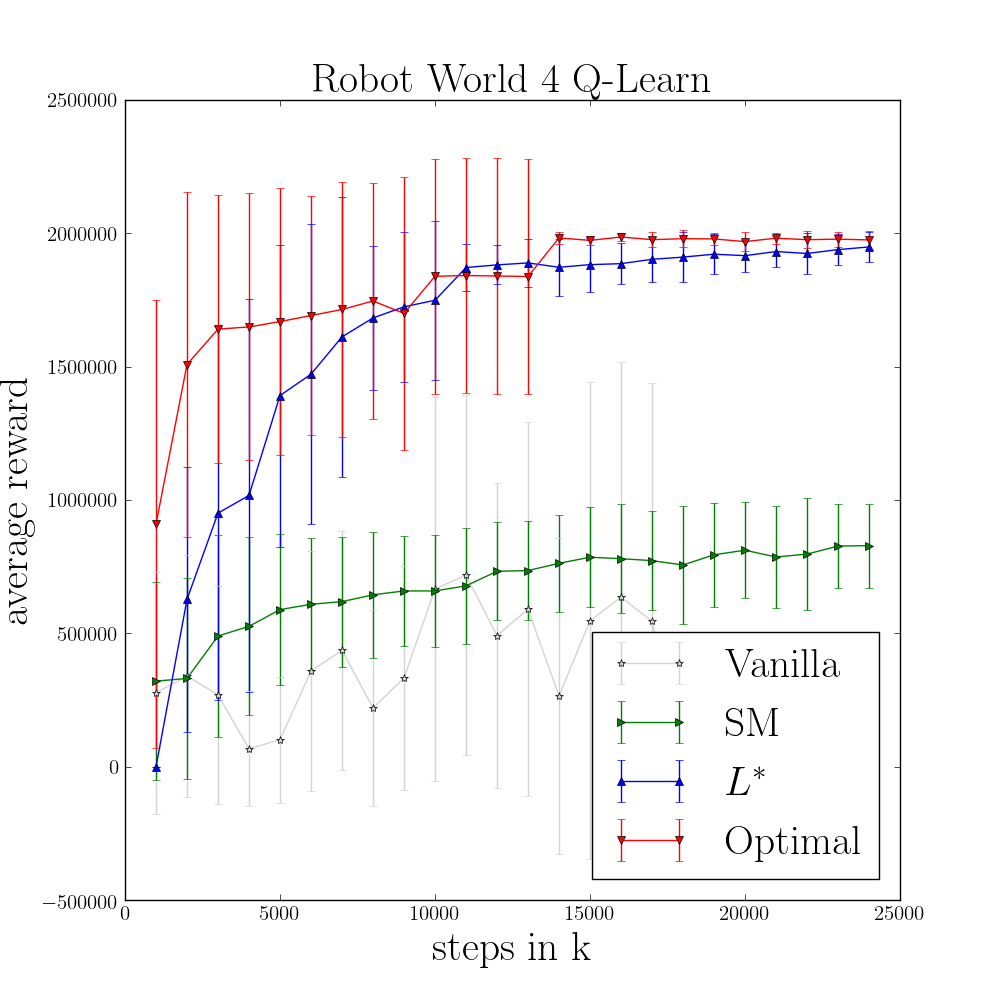}
    \end{subfigure}
    \begin{subfigure}[b]{0.24\textwidth}
      \includegraphics[width=\textwidth]{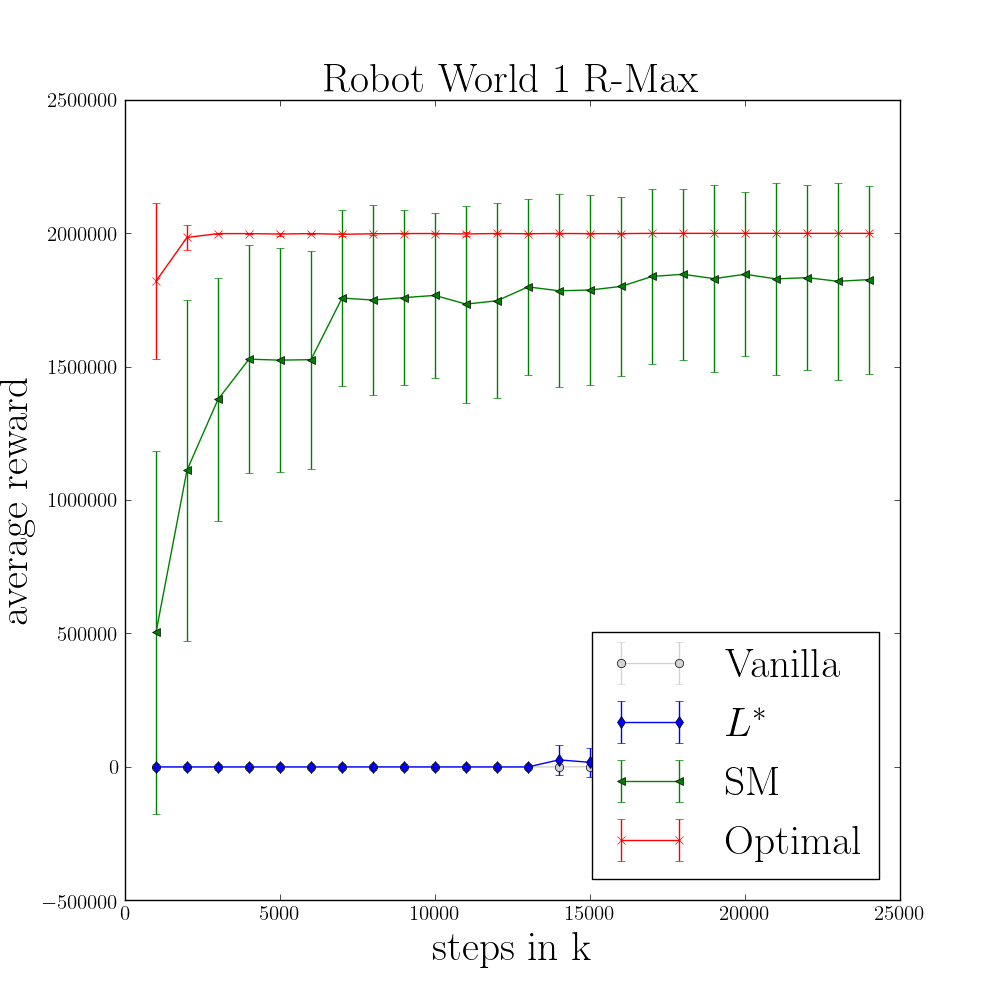}
    \end{subfigure}
    \begin{subfigure}[b]{0.24\textwidth}
      \includegraphics[width=\textwidth]{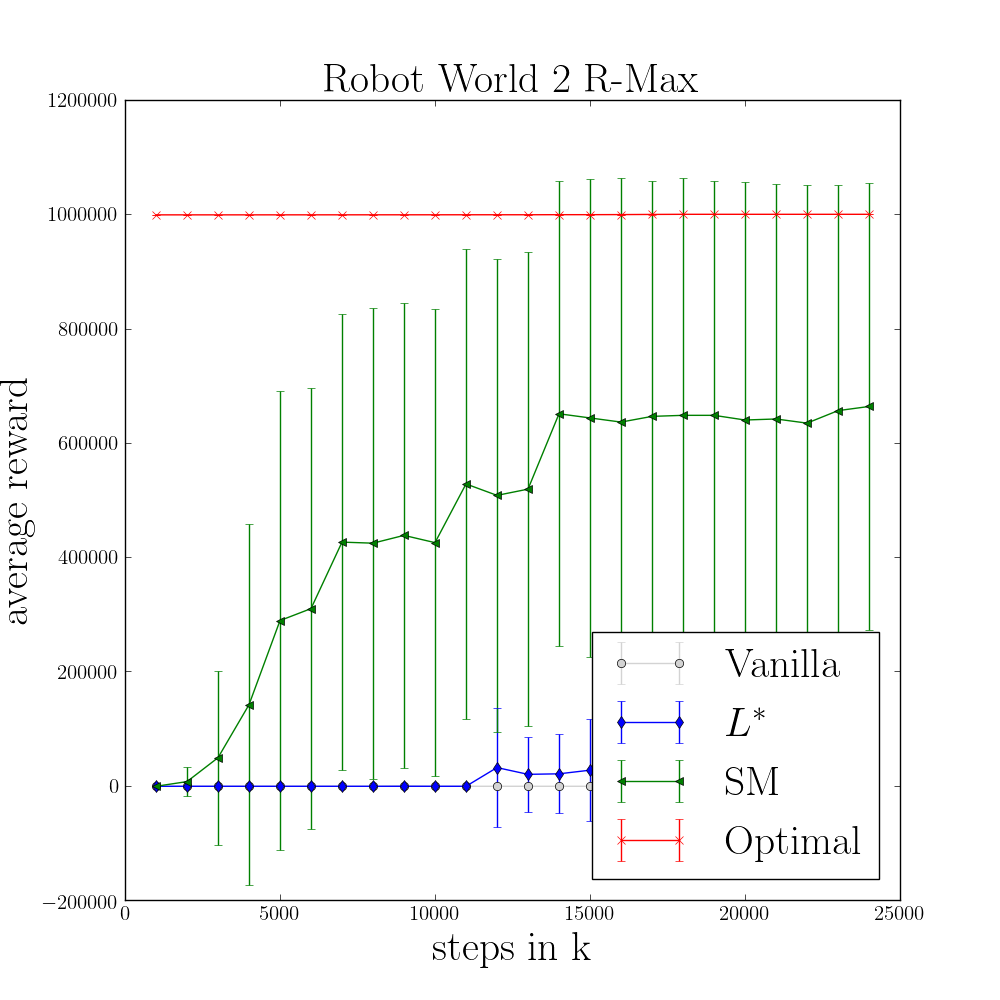}
    \end{subfigure}
    \begin{subfigure}[b]{0.24\textwidth}
      \includegraphics[width=\textwidth]{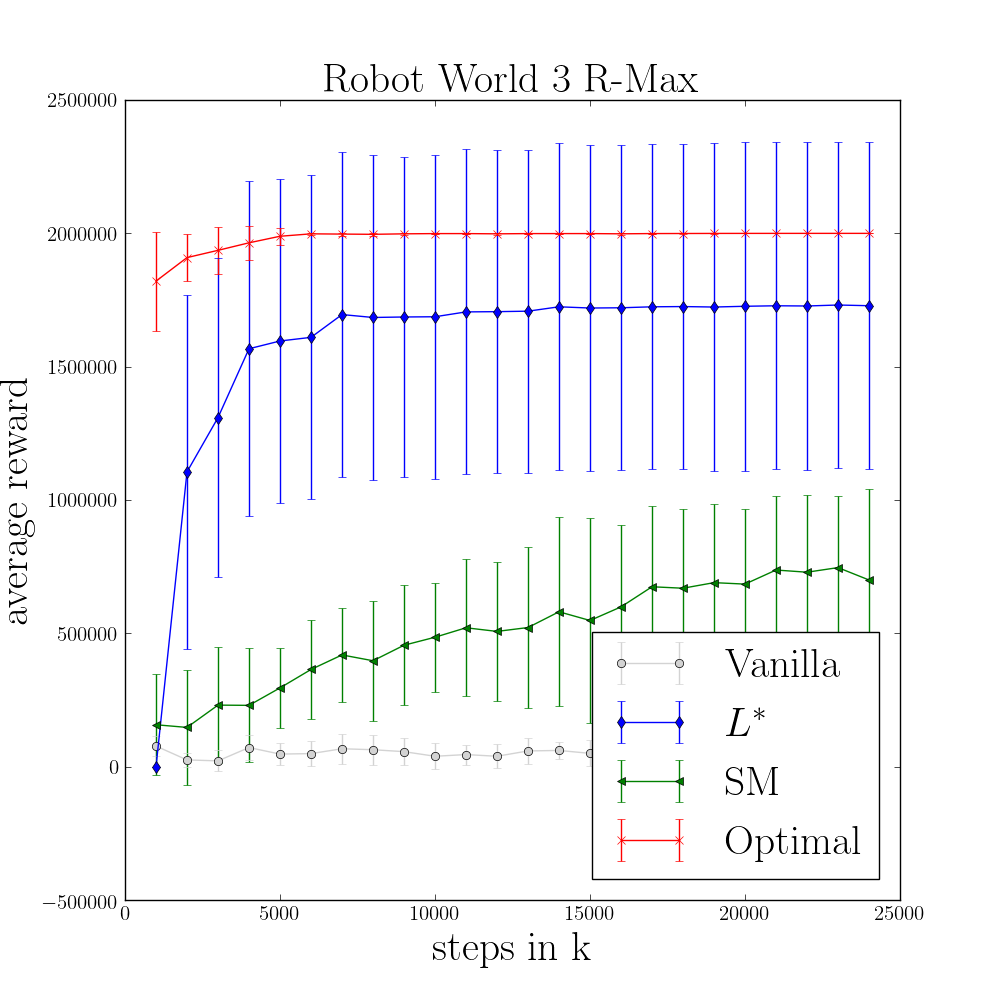}
    \end{subfigure}
    \begin{subfigure}[b]{0.24\textwidth}
      \includegraphics[width=\textwidth]{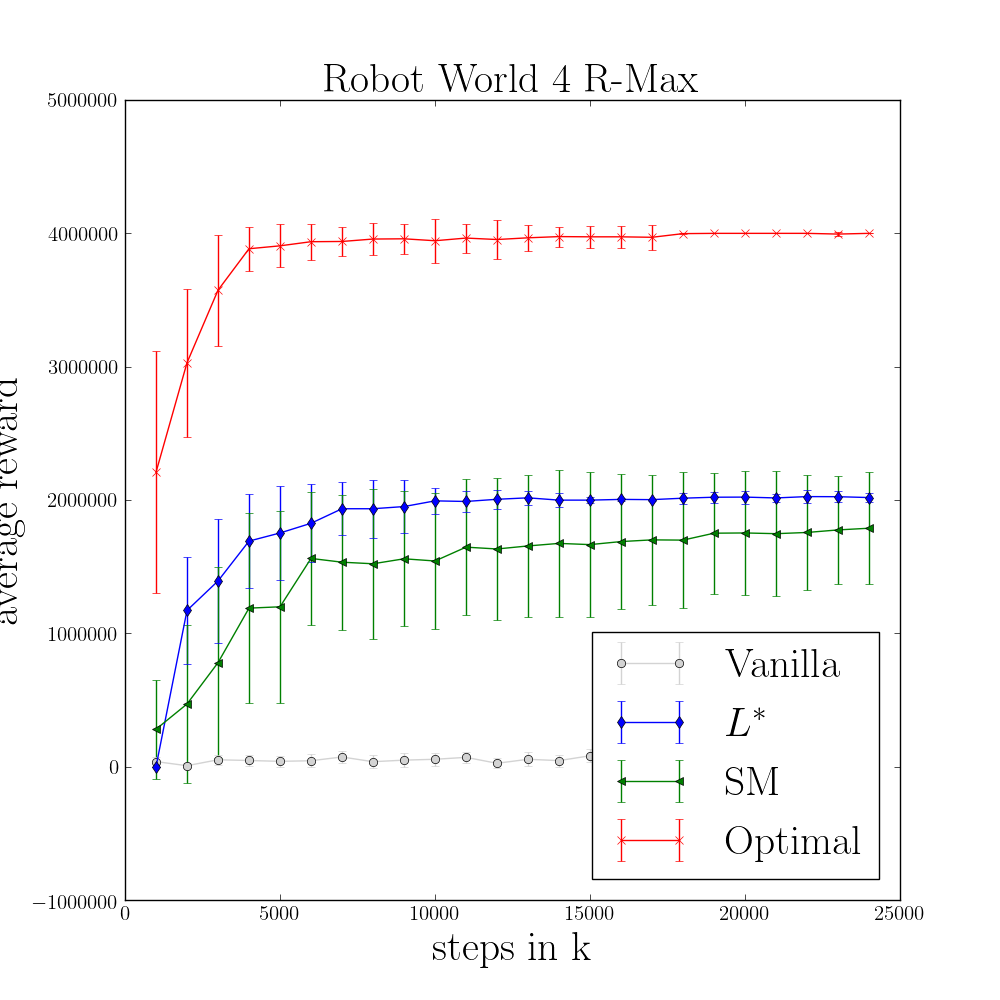}
    \end{subfigure}
    \caption{Non Markovian Robot World Results}
    \label{fig:robot_worldgraph}
  \end{figure*}
 
  Results appear in Figure~2. We plot the optimal $Q$-learning and $R$-max values (where the correct automata were given)
  as baselines. Both converge quickly to near-optimal behavior. We also plot
  the four variants of RL with \lstar\ and \edsm, and the results of 
  vanilla $Q$-learning and $R$-max, which assume that the domain is Markovian. 

  \rmax\ -- Reward Scheme 1: Both the optimal solver and \edsm\ find the optimal policy.
  The optimal solver takes roughly 2M steps while \edsm\ improves logarithmically and achieves sub-optimal results after roughly 8M steps.
  \lstar\ and vanilla fail to make significant progress. Specifically, \lstar\ fails due to the sparsity of observable traces, 
  hence resulting in an inferior teacher. 
  This results in a slowly learned big DFA that over-fitted to the specific map.
  
  \qlearn\ -- Reward Scheme 1: The performance is inferior to \rmax\, 
    Since each evaluated policy re-evaluate the last 50,000 saved traces collected via current best exploration policy,
    the result policy tends to be biased toward those sub-optimal observations.
  \edsm\ achieved a better result than the optimal DFA solver since it uses DFAs that partially encode the current grid object positions, 
  hence result with a slight better policy for that specific grid map.
  Vanilla \qlearn\ fails to find a good policy, and due to explorations and the stochastic dynamics, 
  fluctuates greatly between different evaluation points, although a general improving trend can be observed.
  
  \rmax\ -- Reward Scheme 2: vanilla \qlearn\  is very noisy and fails to converge. 
  The optimal solver finds the best policy roughly after 1M steps.
  \edsm\ improves gradually toward the optimal policy and \lstar\ still fails to converge.

   \qlearn\ -- Reward Scheme 2: Both the optimal solver and \edsm\ converge to the optimal policy roughly after 3M steps. 
  Vanilla is still noisy and converges to a sub-optimal policy after 25M steps with very high STD.
  \lstar\ shows some improvement but still perform poorly.
  
  Both -- Reward Scheme 3: Both the optimal solver and \lstar\ converge to the optimal policy roughly after 6M \& 10M steps.
  \edsm\ is stuck on a sub-optimal policy and finds a single NMR (slightly inferior to the one found by \qlearn). The vanilla algorithms
  fail to find a policy that achieves any NMR.
  
  \rmax\ -- Reward Scheme 4: Optimal finds the best policy after roughly 5M steps.
  Both \lstar and \edsm\ find sub-optimal policies with 2 NMRs after roughly 8M steps. 
  The vanilla policy fails to find a policy that achieve any NMR.
    
  \qlearn\ -- Reward Scheme 4: Both the optimal solver and \lstar\ converge to a sub-optimal policy, achieving half of the NMR's roughly after 15M steps.
  \edsm\ converges on a sub-optimal policy achieving 1 NMR and vanilla finds a policy with 1 NMR that highly depends on the stochastic behaviour of the world. 

  To ensure convergence in the above scenarios, we gradually limited trace length towards that of the minimal positive one
  because when traces are too long, the resulting DFAs are too large. 
  We also limited the number of negative traces to 1K. Since any prefix of a positive/negative trace is also negative, their
  number increases quickly. Thus, we maintain them in a FIFO structure.
  Finally, if the resulting DFA has more than 20 states, we consider this a failure. In that case, the maximal length of 
  the negative traces is reduce by 50\% and the algorithm is called again.
  This process continues until a DFA is inferred or a maximum of 20 failures.

\section{Discussion and Future Work}
  From the experiments we see that \lstar\  is better suited for learning short, simple reward models as in  MAB. Not
  surprisingly, \edsm, which does better in grammar learning,
  is better in the more realistic and challenging robot-world.
  Even when \edsm\  learns an approximate model of optimal DFA, this model still allows
  \rmax\ and \qlearn\  to converge on a decent sub-optimal policy. 

  Another observation is that the vanilla \rmax\ policy can essentially learn a fixed desirable trace if such
  a trace is representable in the policy. This cannot happen in MAB because there is a single state, and the optimal
  policy repeats the same action. But in Robot World, good traces do not require acting differently in the same state,
  and this allows the vanilla algorithm to perform better than the more sophisticated DFA learning algorithms on
  some scenarios. However, from preliminary additional tests conducted, this phenomenon becomes much less likely 
  as domain size increases. Interestingly, vanilla \qlearn\ usually has many fluctuations and is not able to converge to some policy.
  We are not sure why it does not learn a good path.

  The most interesting question for future work is how to induce good exploration of good traces to improve the
  input to the automata learning algorithm. 
  Better exploration is also likely needed to improve our ability to  learn multiple rewards, which was challenging for our algorithms. 
  It seems that once we find one reward, this  strongly biases our algorithms, and more focused exploration
  is probably needed to learn additional rewards. 
  Yet another reason to improve exploration is the large sample sizes required to learn. 
  Perhaps we need trace-oriented exploration. For example, a policy that with probability $\epsilon$ 
  performs a number of random trials, or something similar. 

  Another important direction for future work is attempting to better integrate the two learning tasks. 
  Right now, our exploration policy is not informed by the decisions that the automata learning algorithms must
  make (except when attempting to answer membership queries). For example, it may be possible to bias exploration when
  using \edsm\ to help \edsm\ make better merge decisions.

  Stochastic rewards are another challenge because it is difficult to differentiate between stochastic
  Markovian rewards and NMRs.  To identify them, we probably need more exploration. Then, we need to
  compare different hypothesis and see which one explains the data better.

  Finally, very recently,~\cite{CamachoM19} suggested learning \LTLf formulas directly, instead of learning the automaton. 
  This approach can be much more efficient if the reward corresponds to a small
  \LTLf\ formula, which is quite natural in many cases, and we believe adapting our algorithms by
  replacing the automata learning algorithms with this method is a promising direction for future work.

\section{Summary}
  We described a number of variants of algorithms that combine RL techniques with automata learning to learn in MDPs
  with NMRs. We proved that one such variant converges in the limit, and empirically evaluated all variants
  on two domains.  To the best of our knowledge, our work is the first to explore this problem. It highlights the many
  challenges that this set-up raises, and provides a solid starting point for addressing this problem and motivation for
  much future work.
  
\section{Acknowledgements}
  We thank the reviewers for their useful comments. This work was supported by ISF Grants 1651/19,
  by the Israel Ministry of Science and Technology Grant 54178, and by the Lynn and William Frankel Center for Computer Science.

\bibliographystyle{aaai}
\bibliography{AAAI-GaonM.8332}

\end{document}